\newtheorem*{definition} {Definition} 
\newtheorem*{lemma} {Lemma}
\journal{Neurocomputing}
\begin{document}
%\begin{spacing}{2.0}

\begin{frontmatter}

\title{Adaptive Weighted Nonnegative Matrix Factorization for Robust Feature Representation}

%% Group authors per affiliation:
\author[neu]{Tingting Shen}
\author[neu]{Junhang Li}
\author[neu]{Can Tong}
\author[neu]{Qiang He}
\author[neu]{Chen Li}
\author[stevens]{Yudong Yao}
\author[neu]{Yueyang Teng \corref{C1}}
\cortext[C1]{Corresponding author.}
 \ead{tengyy@bmie.neu.edu.cn}

\address[neu]{College of Medicine and Biological Information Engineering, Northeastern University, Shenyang 110169, China}
\address[stevens]{Department of Electrical and Computer Engineering, Stevens Institute of Technology,
 Hoboken, NJ 07030, USA}

\begin{abstract}
Nonnegative matrix factorization (NMF) has been widely used to dimensionality reduction in machine learning. However, the traditional NMF does not properly handle outliers, so that it is sensitive to noise. In order to improve the robustness of NMF, this paper proposes an adaptive weighted NMF, which introduces weights to emphasize the different importance of each data point, thus the algorithmic sensitivity to noisy data is decreased. It is very different from the existing robust NMFs that use a slow growth similarity measure. Specifically, two strategies are proposed to achieve this: fuzzier weighted technique and entropy weighted regularized technique, and both of them lead to an iterative solution with a simple form. Experimental results showed that new methods have more robust feature representation on several real datasets with noise than exsiting methods.
We make our code available at \url{https://github.com/WRNMF/RobustNMF.git}.
\end{abstract}

\begin{keyword}
entropy regularizer, fuzzier, low-dimensional representation, nonnegative matrix factorization (NMF), robustness.
\end{keyword}

\end{frontmatter}

%\linenumbers

\section{Introduction}
With the rapid development of computer science, we are acculating more and more large-scale and high-dimensional data, thereby bringing about a new challenge for data analysis. It is very difficult to analyze many high-dimensional and high-throughout datasets. Consequently, dimensionality reduction has become an essential step. There are many methods to solve this problem including vector quantization (VQ) \cite{gersho_vector_1992}, singular value decomposition (SVD) \cite{hu_singular_2017}, principal component analysis (PCA) \cite{abdi_principal_2010}, independent component analysis (ICA) \cite{hyvarinen_independent_2000} and nonnegative matrix factorization (NMF) \cite{Lee1999Learning}, etc. In these methods, NMF is attracting more and more attention, which has been widely used in many fields, such as pattern recognition \cite{gong_ficks_2015}, image processing \cite{li_robust_2018} and  data mining \cite{tepper_compressed_2016}.\\

The traditional NMF decomposes a high-dimensional nonnegative matrix into the product of two low-dimensional nonnegative matrices that are respectively called base and representation \cite{Lee1999Learning}. In recent decades, the theoretical research and practical applications of NMF are being carried out intensively, and many variants are developed for different tasks. For example,  Ding \emph{et al.} \cite{ding_convex_2010} proposed SemiNMF by easing the nonnegative constrains on original and base matrices to expand the application scope of NMF. They also limited base matrix to convex combinations of data points for developing the ConvexNMF method.
Guillamet \emph{et al.} \cite{guillamet2003introducing} proposed the weighted NMF method (WNMF), which introduces to improve the NMF capabilities of some representing positive local data.

Most of the NMF variants use the square of the Euclidean distance  or Kullback-Leibler (KL) divergence to measure the similarity between the high-dimensional nonnegative matrix and the product of two low-dimensional nonnegative matrices. However, the results generally cause sensitivity to outliers. In order to improve the robustness of NMF, Xing \emph{et al.} \cite{xing_nonnegative_2018} designed 
the NMFL2 method that used the Euclidean distance to take the place of the square of the Euclidean distance, so that the importance of the outliers decreased. Du \emph{et al.} \cite{du_robust_2012} took the Huber function to measure the quality approximation by considering its connection to $L_2$-norm and $L_1$-norm.
As can be seen, these NMF methods use some similarity measures insensitive to outliers for a robust feature representation.

This paper proposes a totally different robust NMF type, which assigns an adaptive weight for each data point to represent its outlier degree. Certainly, an outlier should be paid less attention than a normal point. The task is implemented by two methods: fuzzier weighted technique and entropy weighted regularized technique, in which one utilizes a power hyperparameter to control the weights distribution and the other uses the entropy to regularize the weights. Then, they are solved by the Lagrange multiplier method for obtaining two solutions with a simple form. Experiments on several datasets with synthetic noise displayed that the proposed methods always achieve a better performance than some existing methods.

\section{Methodology}
$\bf{Notation}$: In this paper, matrices are denoted as capital letters. For a matrix $A$, the $(i, j)-th$ element is indicated as $A_{ij}$; $A^T$ denotes the transpose of $A$; the Frobenius norm is represented as $||\cdot||_F $; the symbols $\odot$ and $\oslash$ mean the item-by-item multiplication and division of two matrices, respectively; $A \geq 0$ means that all elements of $A$ are equal to or larger than 0.

\subsection{Related works}
NMF decomposes a nonnegative matrix $X\in R^{M \times N}$ into the product of two low-dimensional nonnegative matrices: one is the base matrix $W\in R^{M \times L}$ and the other is the representation matrix $H\in  R^{L\times N}$, where  $L<<min\{M,N\}$. It can be formulated as:
\begin{equation}
	X = W H
	\label{eq1}
\end{equation}

As well known, it is difficult to be solved, even there may be no solution, so we turn to an optimization method for achieving the best approximation.
There are many criteria to measure the similarity between $X$ and $WH$. In general, the square of the Frobenius norm and the Kullback-Leibler (KL) divergence are used. In this paper, the square of the Frobenius norm is adopted and the formula is expressed as:
\begin{eqnarray}
		\min&&~F_1(W,H)=\left\|X-WH\right\|_F^2\nonumber \\
		s.~t.&&~W\geq 0,~H \geq 0
		\label{eq2}
\end{eqnarray}

Although the objective function is convex quadratic function on either $W$ or $H$, it isn't convex for both of them.
Thus, it will be alternatively minimized with regard to $W$ and $H$, in which the constructed auxiliary function will be important to derive the iterative update rule.

\begin{definition} (Auxiliary function) If the function $G(h,h')$ satisfies the following conditions:
\begin{eqnarray}
	&&G(h,h')\geq F(h)\\
	&&G(h',h')=F(h')
	\label{eq3}  
\end{eqnarray}
where $h'$ is a given value; then, $G(h,h')$ is the auxiliary function of $F(h)$ on $h'$. 
\end{definition}
Then, we easily draw the following conclusion:
\begin{lemma}If $G(h,h')$ is an auxiliary function of $F(h)$, then under the update rule:
\begin{equation}
	h^*=arg\mathop{\min}_{h}G(h,h')
	\label{eq4}
\end{equation}
the function $F(h)$ does not increase. 
\end{lemma}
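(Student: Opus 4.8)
The plan is to chain the two defining properties of an auxiliary function together with the optimality of $h^*$ into a short sandwich inequality. First I would record what the update rule supplies directly: since $h^* = arg\mathop{\min}_h G(h,h')$, the point $h^*$ minimizes $G(\cdot,h')$ over all admissible $h$, and $h'$ is itself one admissible choice, so $G(h^*,h') \leq G(h',h')$.

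Next I would bring in the two conditions from the Definition. Condition~(1), $G(h,h') \geq F(h)$ for every $h$, evaluated at $h = h^*$ gives $F(h^*) \leq G(h^*,h')$. Condition~(2), $G(h',h') = F(h')$, lets me rewrite the right end of the previous inequality. Concatenating the three facts yields
\begin{equation}
F(h^*) \;\leq\; G(h^*,h') \;\leq\; G(h',h') \;=\; F(h'),
\end{equation}
so $F(h^*) \leq F(h')$; that is, one application of the update does not increase $F$.

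Finally I would remark that iterating the argument with $h^*$ playing the role of the new $h'$ shows the whole sequence of objective values is monotone nonincreasing, and that equality $F(h^*) = F(h')$ can hold only when $h'$ already minimizes $G(\cdot,h')$, i.e.\ at a fixed point of the update. The only point needing a word of care is the tacit assumption that the $arg\mathop{\min}$ exists and can be taken; in the constructions used later in the paper $G$ is a strictly convex quadratic in $h$ with a unique closed-form minimizer, so this is automatic. There is no genuinely hard step here --- the entire content is the displayed three-term inequality --- and the real effort in what follows will go into \emph{constructing} a valid auxiliary function $G$ for the adaptive weighted NMF objective, not into proving this lemma.
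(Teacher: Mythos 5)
Your proof is correct and is essentially identical to the paper's own argument, which establishes the same three-term chain $F(h^*)\leq G(h^*,h')\leq G(h',h')=F(h')$ using the minimality of $h^*$ and the two defining properties of the auxiliary function. The extra remarks on iteration and existence of the minimizer are sensible but not needed beyond what the paper states.
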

\begin{proof}
The conditions satisfied by the auxiliary function make this proof marked because:
\begin{equation}
	F(h^{*})\leq G(h^{*},h')\leq G(h',h')\leq F(h')
	\label{eq5}
\end{equation}
\end{proof}

It can be observed that the original function should decrease if the auxiliary function reaches the minimum. Now, we drive the update rule for NMF. Consider $W$ first, where $W^t>0$ and $H>0$ are fixed. Let $\xi_{ijk}=W^t_{ik}H_{kj}/(W^tH)_{ij}$, certainly, $\xi_{ijk}\geq0$ and $\sum_{k=1}^L\xi_{ijk}=1$. Therefore, the auxiliary function is as below:
\begin{equation}
	f_1(W,W^t)=\sum_{i=1}^M\sum_{j=1}^N\sum_{k=1}^L\xi_{ijk}(X_{ij}-\frac{W_{ik}H_{kj}}{\xi_{ijk}})^2
	\label{eq6}
\end{equation}
The function is separable and its minimization is equivalent to the solutions of some univariate optimizaiton problems. Take the partial derivative of Eq. (\ref{eq6}) and set it to zero so that the following update rule is got.
\begin{equation}
	W \leftarrow W \odot (XH^T) ~\oslash~ (WHH^T)
	\label{eq7}
\end{equation}
Similarly, the update of $H$ is obtained:
\begin{equation}
	H \leftarrow H \odot (W^TX) ~\oslash~ (W^TWH)
	\label{eq8}
\end{equation}

\subsection{Proposed method}
Different from the previous methods that use a similarity measure with slow growth for outliers, we introduce an optimizable weight $Q_j$ to emphasize the importance of each data point, in which an outlier will be paid less attention than a normal point. 
\begin{eqnarray}
		\min&&~F_2(W,H,Q)=\sum_{i=1}^M \sum_{j=1}^NQ_{j}[X_{ij}-(WH)_{ij}]^2\nonumber\\ 
		s.~t.&&~ W\geq0,~H\geq0,~Q\geq0,~\sum_{j=1}^N Q_{j}=1
		\label{eq9}
\end{eqnarray}

We first consider the solution to $Q$ in an alternative optimization manner. Obviously, for fixed $W$ and $H$, $Q_{j}$ can be solved as $Q_{j}=1$ if $Z_{j}=min\{Z_{1}, Z_{2},..., Z_{N}\}$, otherwise 0, where $Z_j=\sum_{i=1}^M(X-WH)_{ij}^2$. The derivation is very easy, which is similar to that of the membership degree in k-means \cite{macqueen_methods_nodate}. It demonstrates the simple fact that only one $Q_j$ is 1, and the others are 0, i.e., the only one data point is involved in NMF. Such a phenomenon is certainly incompatible with the real problem. In fact, we generally think that the weights are in the range of $[0, 1]$ instead of 0 or 1, which 
is more consistent fuzzy logic thinking process. To address this issue, we propose two methods as follows.

\subsubsection{Fuzzier weighted robust NMF (FWRNMF)}
We give a hyperparameter $p>1$, representing fuzzy level, to smooth $Q_{j}$ so that weights fall within [0, 1]. The specific model is as follows:
\begin{eqnarray}
	\min&&~F_3(W,H,Q)=\sum_{i=1}^M \sum_{j=1}^NQ_{j}^p[X_{ij}-(WH)_{ij}]^2\nonumber\\ 
	s.~t.&&~ W\geq0,~H\geq0,~Q\ge0,~\sum_{j=1}^NQ_{j}=1
	\label{eq10}
\end{eqnarray}
We still alternatively minimize the objective function with respect to $Q$, $W$ and $H$. First, $Q$ is solved.
We need to construct the Lagrange function as:
\begin{equation}
	L(Q,\lambda)=\sum_{i=1}^M \sum_{j=1}^NQ_{j}^p[X_{ij}-(WH)_{ij}]^2-\lambda(\sum_{j=1}^NQ_{j}-1)
	\label{eq11}
\end{equation}
where $\lambda$ is the Lagrange multiplier.

By setting the gradient of Eq. (\ref{eq11}) with respect to $\lambda$ and $Q_{j}$ to zero, we obtain the following equations system:
\begin{numcases}{}
	\frac{\partial L}{\partial \lambda}&=$\sum_{j=1}^NQ_{j}-1=0$ \label{eqsystem1}\\
	\frac{\partial L}{\partial Q_{j}}&=$pQ_{j}^{p-1}\sum_{i=1}^M[X_{ij}-(WH)_{ij}]^2-\lambda=0$
\end{numcases}
From Eq. (14), we know that:
\begin{equation}
	\begin{aligned}
		Q_{j}=\sqrt[\leftroot{-2}\uproot{12}p-1]{\frac{\lambda}{p}}\sqrt[\leftroot{-2}\uproot{14}p-1]{\frac{1}{\sum_{i=1}^M[X_{ij}-(WH)_{ij}]^2}}
		\label{eq14}
	\end{aligned}
\end{equation}
Substituting Eq. (\ref{eq14}) into Eq. (13), we have:
\begin{equation}
	\begin{aligned}
		\sqrt[\leftroot{-2}\uproot{14}p-1]{\frac{\lambda}{p}}=\frac{1}{\sum_{j=1}^N\sqrt[\leftroot{-2}\uproot{8}p-1]{\frac{1}{\sum_{i=1}^M[X_{ij}-(WH)_{ij}]^2}}}
		\label{eq15}
	\end{aligned}
\end{equation}
Substituting it into Eq. (\ref{eq14}), we achieve that\\
\begin{equation}
	\begin{aligned}
		Q_{j}=\frac{\sqrt[\leftroot{-2}\uproot{8}p-1]{\frac{1}{\sum_{i=1}^M[X_{ij}-(WH)_{ij}]^2}}}{\sum_{l=1}^N\sqrt[\leftroot{-2}\uproot{8}p-1]{\frac{1}{\sum_{i=1}^M[X_{il}-(WH)_{il}]^2}}}
		\label{eq16}
	\end{aligned}
\end{equation}

Then, we can solve $W$ and $H$ with fixed $Q$, which is similar to the traditional NMF method. For example, we can construct the following auxiliary function about $W$:
\begin{equation}
	\begin{aligned}
		f_2(W,W^t)=&\sum_{i=1}^M \sum_{j=1}^N\sum_{k=1}^LQ_{j}\xi_{ijk}(X_{ij}-\frac{W_{ik}H_{kj}}{\xi_{ijk}})^2
		\label{eq17}
	\end{aligned}
\end{equation}
Setting the partial derivative of $f_2(W,W^t)$ to zero yields the following update rule:
\begin{equation}
	W \leftarrow W\odot(XQ^pH^T)~\oslash~(WHQ^p H^T)
	\label{eq18}
\end{equation}
where $Q^p=diag([Q_1^p, Q_2^p,...,Q_N^p])$. 

Similarly, we can also easily obtain the update rule for $H$ as follows:
\begin{equation}
	H \leftarrow H \odot (W^TX)\oslash(W^TWH)
	\label{eq19}
\end{equation}

The update rules to $W$ and $H$ are similar to the existing WNMF methods in \cite{belhumeur_eigenfaces_nodate,kuang_symnmf_2015}. Optimizing FWRNMF is summarized as follows in $\mathbf{Algorithm~ 1}$:
\begin{algorithm}[htb]
	\caption{FWRNMF}
	\label{alg:Framwork}
	\begin{algorithmic}[1]
		\Require
		Given the input nonnegative matrix $X$, reduced dimension number $K$ and hyperparameter $p$;
		\Ensure
		the weight matrix $Q$, the base matrix $W$ and the representation matrix $H$;
		\State Randomly initialize $W> 0$ and $H>0$;
		%\label{code:fram:extract}
		\While{not convergence}
		\label{code:fram:trainbase}
		\State Update $Q$ by Eq. (\ref{eq16});
		\label{code:fram:add}
		\State Update $W$ by Eq. (\ref{eq18});
		\label{code:fram:classify}
		\State Update $H$ by Eq. (\ref{eq19});
		\label{code:fram:classify}
		\EndWhile
		\label{code:fram:select} \\
		\Return  $Q$, $W$ and $H$.
	\end{algorithmic}
\end{algorithm}

\subsubsection{Entropy weighted robust NMF (EWRNMF)}
We apply an information entropy to regularize the objective function of NMF to obtain the weights in the range of [0, 1] instead of 0 or 1. The information entropy can indicate the uncertainty of weights.
\begin{eqnarray}
	\min&&~F_4(W,H,Q)=\sum_{i=1}^M \sum_{j=1}^NQ_{j}[X_{ij}-(WH)_{ij}]^2\nonumber\\
	&&~~~~~~~~~~~~~~~~~~~+\gamma\sum_{j=1}^NQ_{j}ln(Q_{j})\nonumber\\ 
	s.~t.&&~ W\geq0,~H\geq0,~Q\ge0,~\sum_{j=1}^NQ_{j}=1
	\label{eq20}
\end{eqnarray}
where $\gamma>0$ is a given hyperparameter to smooth the weights. The first term of the objective function is the sum of errors, and the second term is the negative entropy of the weights. The original objective function in Eq. (\ref{eq9}) results in only one data point being involved in feature representation, and the entropy regularizer will stimulate more points to participate in feature representation. \\

We construct the Lagrange function of Eq. (\ref{eq20}) with respect to $Q$ as:
\begin{eqnarray}
	L(Q,\lambda)&&=\sum_{i=1}^M \sum_{j=1}^NQ_{j}[X_{ij}-(WH)_{ij}]^2\nonumber\\
	&&+\gamma\sum_{j=1}^NQ_{j}ln(Q_{j})-\lambda(\sum_{j=1}^NQ_{j}-1)
	\label{eq21}
\end{eqnarray}
where $\lambda$ is still the Lagrange multiplier.

By setting the gradient of Eq. (\ref{eq21}) with respect to $\lambda$ and $Q_{j}$ to zero, we obtain the following equations system:\\
\begin{numcases}{}
	\frac{\partial L}{\partial \lambda}&=$\sum_{j=1}^NQ_{j}-1=0$ \label{eqsystem1}\\
	\frac{\partial L}{\partial Q_{j}}&=$\sum_{i=1}^M[X_{ij}-(WH)_{ij}]^2+\gamma lnQ_{j}+\gamma-\lambda=0$
\end{numcases}
From Eq. (24), we know that:\\
\begin{equation}
	\begin{aligned}
		Q_{j}=e^{\frac{\lambda-\gamma}{\gamma}}e^{-\frac{\sum_{i=1}^M[X_{ij}-(WH)_{ij}]^2}{\gamma}}
		\label{eq24}
	\end{aligned}
\end{equation}
Substituting Eq. (\ref{eq24}) into Eq. (23), we have:
\begin{eqnarray}
	e^{\frac{\lambda-\gamma}{\gamma}} =\frac{1}{\sum_{j=1}^N e^{{-\frac{\sum_{i=1}^M[X_{ij}-(WH)_{ij}]^2}{\gamma}}}}  
	\label{eq25}
\end{eqnarray}
Substituting this expression to Eq. (\ref{eq24}), we find that:
\begin{equation}
	Q_{j}=\frac{e^{-\frac{\sum_{i=1}^M[X_{ij}-(WH)_{ij}]^2}{\gamma}}}{\sum_{l=1}^N e^{-\frac{\sum_{i=1}^M[X_{il}-(WH)_{il}]^2}{\gamma}}}  
	\label{eq26}
\end{equation}

 Then, we can solve $W$ and $H$ with fixed $Q$, which is similar to FWRNMF. 
\begin{eqnarray}
	W &\leftarrow& W\odot (XQH^T)~\oslash~(WHQ H^T)    \label{eq28}\\
	H &\leftarrow& H \odot (W^TX)\oslash(W^TWH)
	\label{eq29}
\end{eqnarray}

EWRNMF is summarized as follows in $\mathbf{Algorithm~ 2}$:
\begin{algorithm}[htb]
	\caption{ EWRNMF}
	\label{alg:Framwork}
	\begin{algorithmic}[1]
		\Require
		Given the input nonnegative matrix $X$, reduced dimension number $K$ and hyperparameter $\gamma$;
		\Ensure
		the weight matrix $Q$, the base matrix $W$ and the representation matrix $H$;
		\State Randomly initialize $W> 0$ and $H> 0$;
		%\label{code:fram:extract}
		\While{not convergence}
		\label{code:fram:trainbase}
		\State Update $Q$ by Eq. (\ref{eq26});
		\label{code:fram:add}
		\State Update $W$ by Eq. (\ref{eq28});
		\label{code:fram:classify}
		\State Update $H$ by Eq. (\ref{eq29});
		\label{code:fram:classify}
		\EndWhile
		\label{code:fram:select} \\
		\Return  $Q$, $W$ and $H$.
	\end{algorithmic}
\end{algorithm}

\subsection{Extensions}
Obviously, the proposed techniques are irrelative to the measure between $X$ and $WH$, so it can easily be popularized to KL-divergence, $\alpha$-divergence, Orthogonal NMF, ConvexNMF, etc, which demonstrates its good compatibility. We omit the derivation processing since it is very similar to the proposed ones.

\section{Experiments}
\subsection{Experimental description}
Experiments were performed on an HP Compaq PC with a 2.90-GHz Core i7-10700 CPU and 16 GB memory, and all the methods were implemented in MATLAB. We compared the performance of the two proposed methods with the traditional NMF (named EucNMF), ConvexNMF, NMFL2 and HuberRNMF on nine public datasets, including the Yale, ORL, GTFD, Winequality-red, Page-blocks, Balance, Thyroid, WDBC and Dimdata. The important details of these datasets are shown in Table \ref{tab1}. We add Gaussian noise to the data to interpret the robustness of the methods as below:
\begin{equation}
	X_{ij} = X_{ij} + cN(0,X_{ij})\label{eq:noise}
\end{equation}
where $c$ control the noise level and $N(0,X_{ij})$ follows a Gaussian distribution with mean 0 and variance $X_{ij}$. It implies that noise relates to the scale of data, and a bigger value corresponds to a higher noise, which is reasonable in practical applications.

\begin{table}[!h] \centering
	\begin{threeparttable}
		\caption{Description of the datasets.}
		\label{tab1}
		\begin{tabular}{lcccccc}  
			\toprule   
			Dataset & Samples & Dimensions & Classes \\	
			\midrule   
			Yale \cite{belhumeur_eigenfaces_nodate}&165&$32\times32$&15   \\  
			ORL \cite{kuang_symnmf_2015}&400&$32\times32$&40\\
			GTFD\tnote{1} &750&$32\times32$&50\\
			Winequality-red\tnote{2}  &4898  &12  &11\\
			Page-blocks\tnote{2}   &5473  &10   &5\\
			Balance\tnote{2}    &625 &4 &3\\
			Thyroid\tnote{2}      &7200 &21  &3\\
			WDBC\tnote{2}      &569   &32    &2\\
			Dimdata\tnote{3} &4192&14&2\\
			\bottomrule
		\end{tabular}
		\begin{tablenotes}
			\item[1] {ftp://ftp.ee.gatech.edu/pub/users/hayes/facedb/}
			\item[2] {http://archive.ics.uci.edu/ml/datasets/}
			\item[3] {http://pages.cs.wisc.edu/~olvi/}
			%\item[4] {}
			%\item[5] {http://archive.ics.uci.edu/ml/datasets/Thyroid+Disease}
			%\item[6] {http://archive.ics.uci.edu/ml/datasets/Page+Blocks+Classification}
		\end{tablenotes}
	\end{threeparttable}
\end{table}

After obtaining a new feature representation, we use k-means to cluster them and then evaluate the clustering results. Clustering accuracy (ACC) \cite{liu_constrained_2012}, \cite{plummer1986matching} and normalized mutual information (NMI) \cite{cai_document_nodate}, \cite{shahnaz_document_2006} are used to evaluate the performance of these clustering results.

Given a set of the ground true class labels $y$ and the obtained cluster labels $y'$, the clustering accuracy is defined as:
\begin{equation}
	ACC=\frac{\sum_{i=1}^N\delta(y_i,map(y'_i))}{N}
	\label{eq34}
\end{equation}
where:
$$\delta(a,b)=
\begin{cases}
	1,& \text{\emph{a = b}}\\
	0,& \text{otherwise}
\end{cases}$$
and $map(\cdot)$ is a permutation mapping function that maps the obtained cluster labels to the real labels. The higher the ACC value is, the better the clustering performance.

NMI is used to calculate the agreement between the two data distributions and is defined as follows:
\begin{equation}
	NMI(y,y')=\frac{MI(y,y')}{max(H(y),H(y'))}
	\label{eq35}
\end{equation}
where $H(y)$ is the entropy of $y$. $MI(y,y')$ quantifies the amount of information between two random variables (i.e., $y$ and $y'$) and is defined as:
\begin{equation}
	MI(y,y')=\sum_{y_i\in y,y'_j\in y'}p(y_i,y'_j)log(\frac{p(y_i,y'_j)}{p(y_i)p(y'_j)})
	\label{eq36}
\end{equation}
where $p(y_i)$ and $p(y'_j)$ are the probabilities that a data point selected from the dataset belongs to the clusters $y_i$ and $y'_j$, respectively; and $p(y_i,y'_j)$ is the joint probability that an arbitrarily selected data point belongs to clusters $y_i$ and $y'_j$ concurrently. NMI ranges from 0 to 1, and the larger NMI is, the better the clustering performance.

Because NMF does not have a sole solution, we randomly initialized 10 times to obtain an averaged ACC and NMI for a creditable comparsion.
And we force reduced dimensionalty to equal to clustering number, which is a usual practice.

\subsection{Clustering performance}
We first compare all the methods on all the datasets, in which the hyperparameters of HuberRNMF, FWRNMF and EWRNMF are optimally from $cutoff\in\left\{10^i,i=-4,-3,...,3,4\right\}$, $p\in\left\{1.5,2,...,10.5,11\right\}$, and $\gamma\in\left\{10^i,i=-4,-3,...,3,4\right\}$, respectively. The noise level is set to be $c=0.05$. In the following experiments, we always use such a noise level unless otherwise specified. The optimal hyperparameter statergy is also used in the following experiments unless otherwise stated. Clustering results are shown in Tables \ref{tab2} and \ref{tab3}. We can see that either FWRNMF or EWRNMF always exhibits better performance than other methods, meanwhile NMF2L and HuberRNMF are generally better than the traditional NMF and ConvexNMF. It demonstrates the feasibility and effectivity of the proposed methods. In general, none of FWRNMF or EWRNMF has established total supremacy over each other.

\begin{table*}[!h] 
	\renewcommand\arraystretch{1.2}
	\caption{ACC on the databases (\%).}
	\label{tab2}	\centering
	\begin{tabular}{lccccccc}  
		\toprule   
		Method & EucNMF & ConvexNMF  & NMFL2 & HuberRNMF & FWRNMF & EWRNMF \\  
		\midrule   
		Yale  & 39.82 &  30.76  & 40.91 & 41.06 & \bf{41.91} & 41.61   \\  
		ORL  & 66.07 &  26.06  & 66.35 & 67.99 & \bf{69.02} & 67.92   \\   
		GTFD  & 62.56 &  49.32  & 63.76 & 64.73 & \bf{66.85} & 64.17   \\ 
		Winequality-red  &  30.29 &  28.99  & 29.66 & 30.44 & \bf{42.40} & \bf{42.40}  \\
		Page-blocks  &34.17 &  34.99  & 38.71 & 36.67 & \bf{89.71} & \bf{89.71}  \\ 
		Balance  & 51.70 &  51.34  & 50.78 & 51.93 & 51.66 & \bf{54.55}  \\
		Thyroid  & 54.04 &  55.83  & 54.65 & 54.19 & \bf{92.56} & \bf{92.56}  \\ 
		WDBC  & 87.69 &  83.68  & 87.70 & 88.58 & 89.01 & \bf{89.69}   \\ 
		Dimdata  & 52.45 &  54.26  & 52.32 & 52.45 & \bf{69.74} & 67.42    \\ 
		
		\bottomrule  
	\end{tabular}
\end{table*}

\begin{table*}[!h] 
	\renewcommand\arraystretch{1.2}
	\caption{NMI on the databases (\%).}
	\label{tab3}	\centering
	\begin{tabular}{lccccccc}  
		\toprule   
		Method & EucNMF & ConvexNMF  & NMFL2 & HuberRNMF & FWRNMF & EWRNMF \\  
		\midrule   
		Yale  &  45.23 & 38.04 & 46.20 & 46.31 & 47.05 & \bf{47.15}    \\  
		ORL  &  83.48 & 50.87  & 83.66 & 84.40 & \bf{84.84} & 84.34   \\   
		GTFD  & 85.48 & 73.93  & 85.60 & 86.05 & \bf{86.71} & 85.90    \\ 
		Winequality-red  &   10.56 & 8.47  & 10.53 & 10.63 & 9.88 & \bf{10.79}  \\
		Page-blocks  &12.16 & 6.78  & 13.27 & 12.57 & \bf{15.33} & 12.55  \\  
		Balance  &  10.78 & 13.01  & 10.11 & 11.73 & 12.84 & \bf{17.48}   \\
		Thyroid  &  1.60 & 1.55  & 1.76 & 1.61 & \bf{3.61} & 2.41   \\   
		WDBC  &  51.18 & 34.76  & 51.20 & 54.32 & 54.54 & \bf{54.57}    \\ 
		Dimdata  & 0.18 & 0.84  & 0.16 & 0.18 & \bf{16.54} & 14.33     \\ 
		\bottomrule  
	\end{tabular}
\end{table*}

\subsection{Hyperparameter selection}
We inspect the ability of FWRNMF and EWRNMF to improve the performance of the traditional NMF on the Yale and WDBC datasets for dimensionality reduction and k-means clustering. The hyperparameters of FWRNMF and EWRNMF are still selected optimally as above. 
As shown in Figures \ref{fig1} and \ref{fig2}, regardless of the hyperparameter $p$ or $\gamma$, the proposed methods indeed provide better performance than the traditional NMF and can achieve a consistently superior clustering result on extensive hyperparameters. %It indicates the roubustness of them with respect to the predetermined hyperparameters.

\begin{figure*}[!h]
	\centering
	\includegraphics[height=165pt,width=400pt]{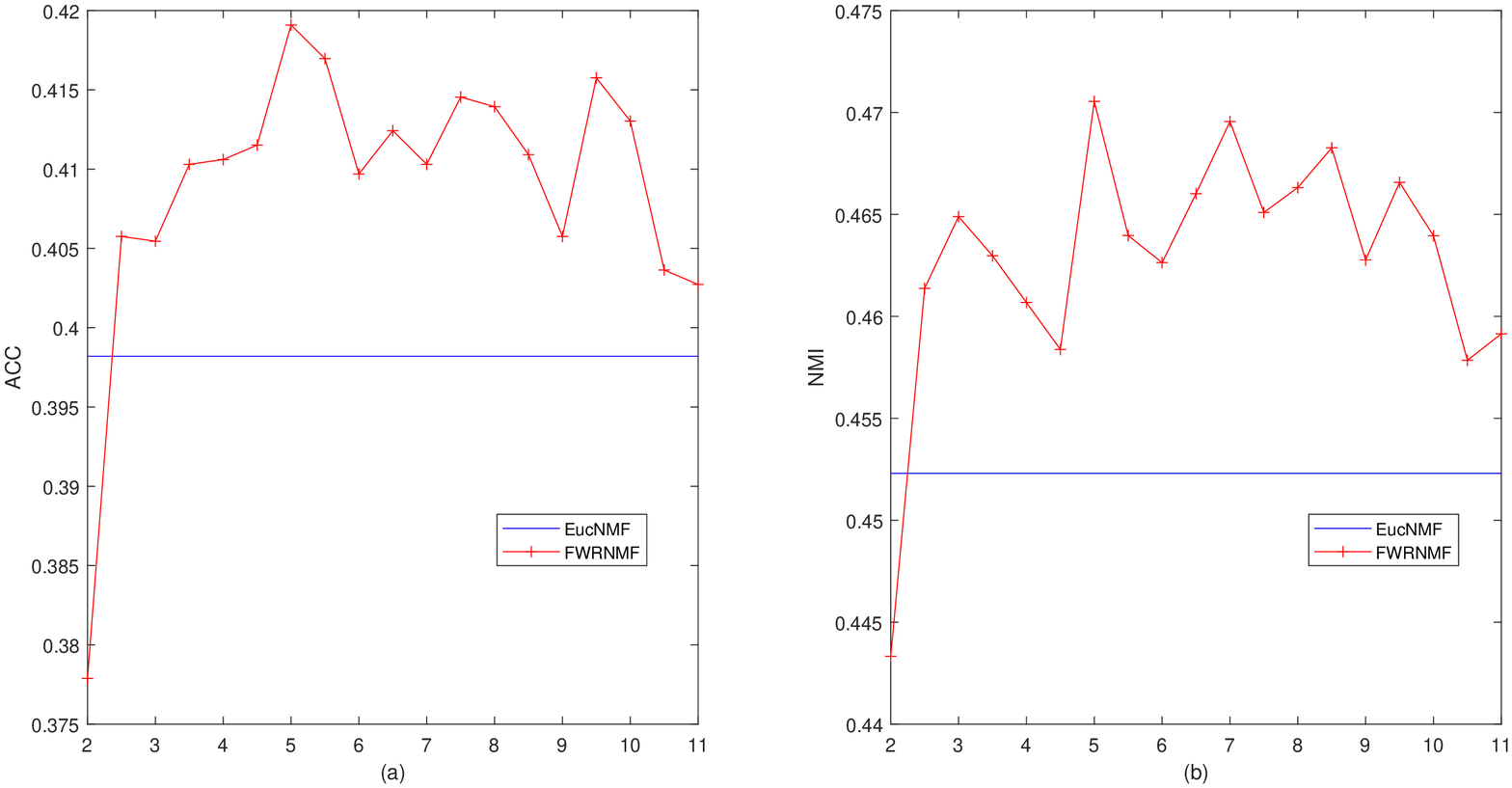}\label{4}
	\caption{Clustering performance versus the hyperparameter $p$ on the Yale dataset: (a) ACC and (b) NMI.}
	\label{fig1}
\end{figure*}

\begin{figure*}[!h]
	\centering
	\includegraphics[height=165pt,width=400pt]{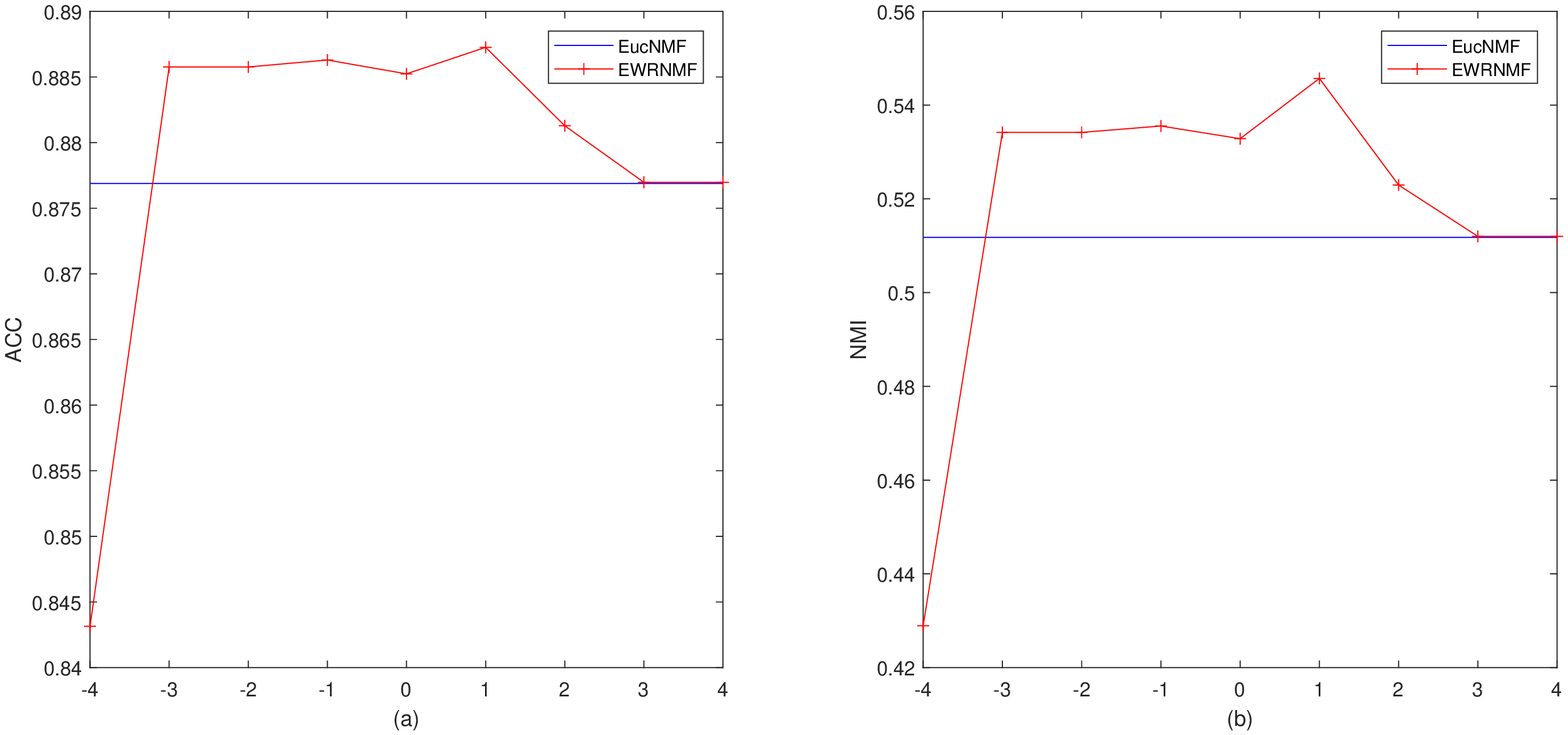}\label{4}
	\caption{Clustering performance versus the hyperparameter $\gamma$ (the abscissa is the value of i $\in\left\{\gamma=10^i,i=-4,-3,...,3,4\right\}$) on the WDBC dataset: (a) ACC and (b) NMI.}
	\label{fig2}
\end{figure*}

\subsection{Clustering results versus noise level}
We investigate the robustness of the proposed methods with regard to different noise levels on two datasets: GTFD and Balance, which add Gaussian noise to each dataset by Eq. \eqref{eq:noise} with the noise level $c$ ranging from 0.02 to 0.1. The results are shown in Figures \ref{fig:noise_GTFD} and \ref{fig:noise_Balance}. It can be seen that the effect of FWRNMF is particularly prominent in GTFD and EWRNMF performs well in Balance. Furthermore, regardless of which dataset, either of two proposed methods always obtains the best results, which shows a strong robustness.

\begin{figure*}[htbp]
	\centering
	\includegraphics[height=165pt,width=400pt]{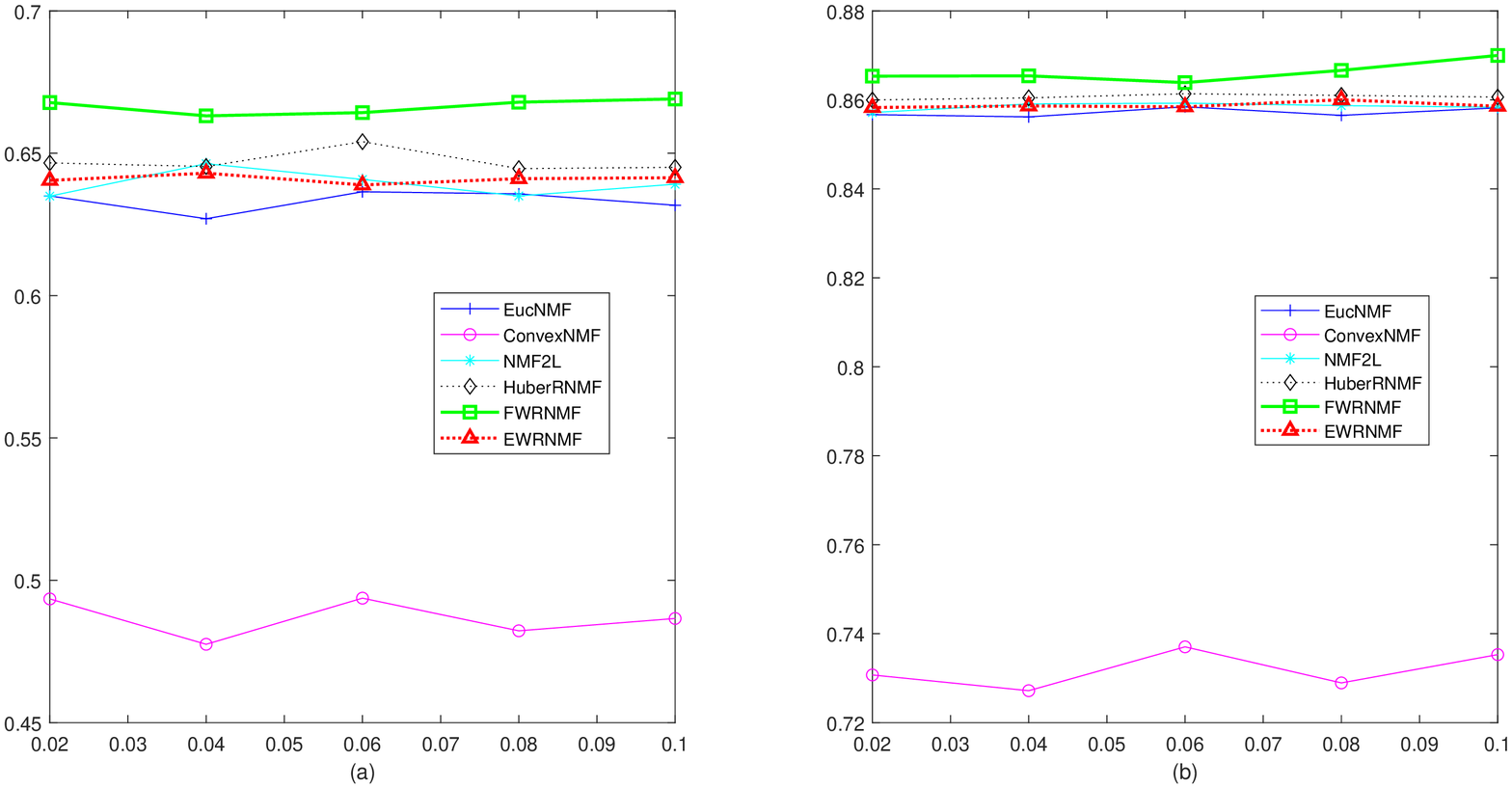}
	\caption{Clustering performance versus noise level  on the GTFD dataset: (a) ACC and (b) NMI.}
	\label{fig:noise_GTFD}
\end{figure*}

\begin{figure*}[htbp]
	\centering
	\includegraphics[height=165pt,width=400pt]{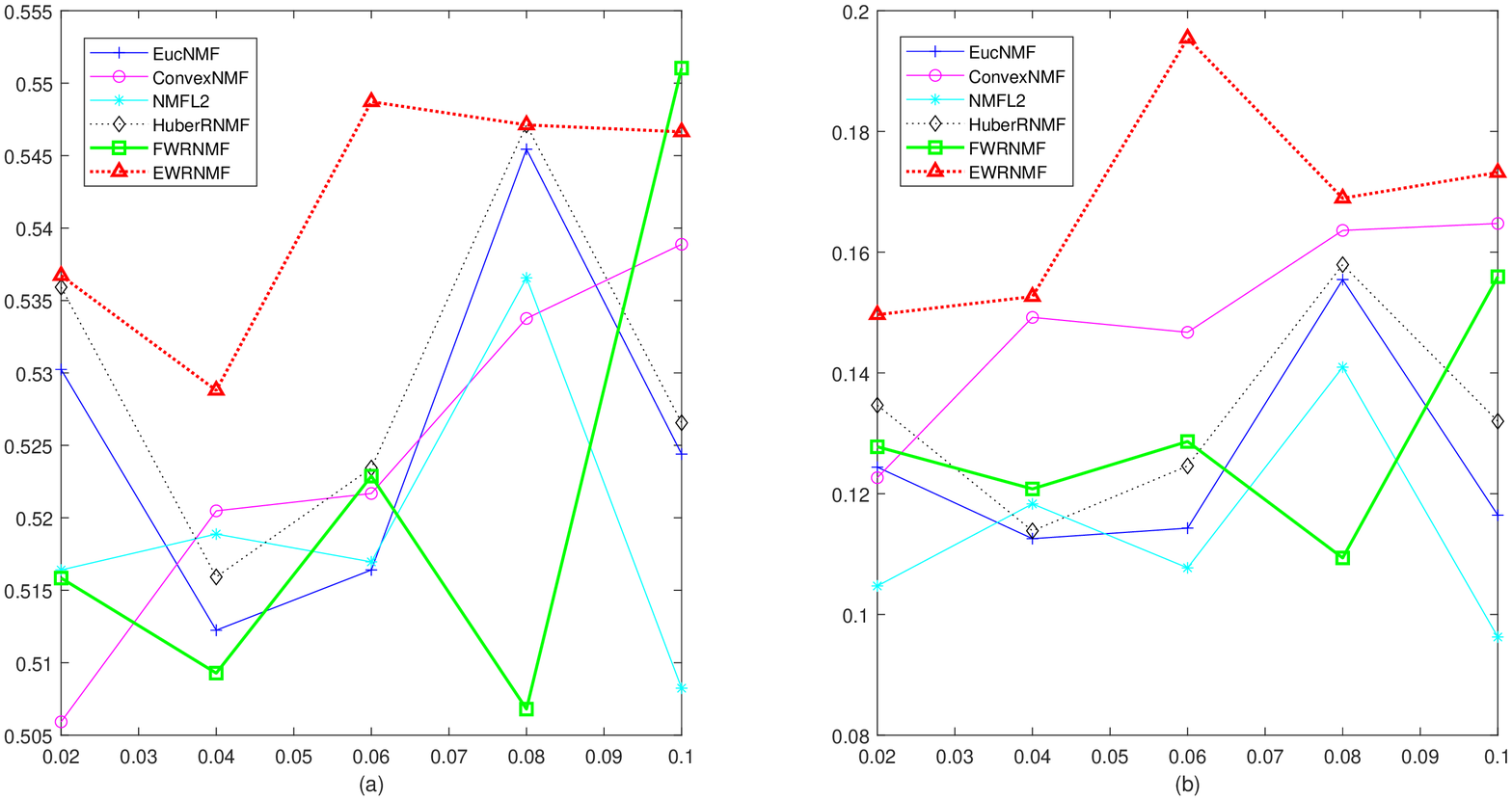}
	\caption{Clustering performance versus noise level on the Balance dataset: (a) ACC and (b) NMI.}
	\label{fig:noise_Balance}
\end{figure*}

%\begin{figure*}[htbp]
%	\centering
%	\includegraphics[scale=0.45]{dimdata_noise.eps}
%	\caption{Clustering performance versus the noise level  ($c$) on the Dimdata dataset: (a) ACC and (b) NMI.}
%	\label{fig:noise_Dimdata}
%\end{figure*}

\subsection{Clustering results versus cluster number}
We study the relationship between the evaluation standards and cluster number on the Yale and GTFD datasets, where cluster number ranges from 2 to 10 are selected. The experimental details are described as follows:\\
\indent 1) Randomly select \emph{k} categories as a subset for the following experiment.\\
\indent 2) Randomly initialize $W$, $H$ and obtain new representations, then cluster them by k-means. Hyperparameters are selected according to the above instruction.\\
\indent 3) Repeat 1) and 2) 10 times to obtain an average result.

It can be seen in Figures \ref{fig6} and \ref{fig7} that either of our methods is better than the other methods. In detail, for the Yale dataset, EWRNMF is the best, for the GTFD dataset, FWRNMF is the best; however, none of the proposed methods is always the best.

%\subsection{Clustering results}

\begin{figure*}[htbp]
	\centering
	\includegraphics[height=165pt,width=400pt]{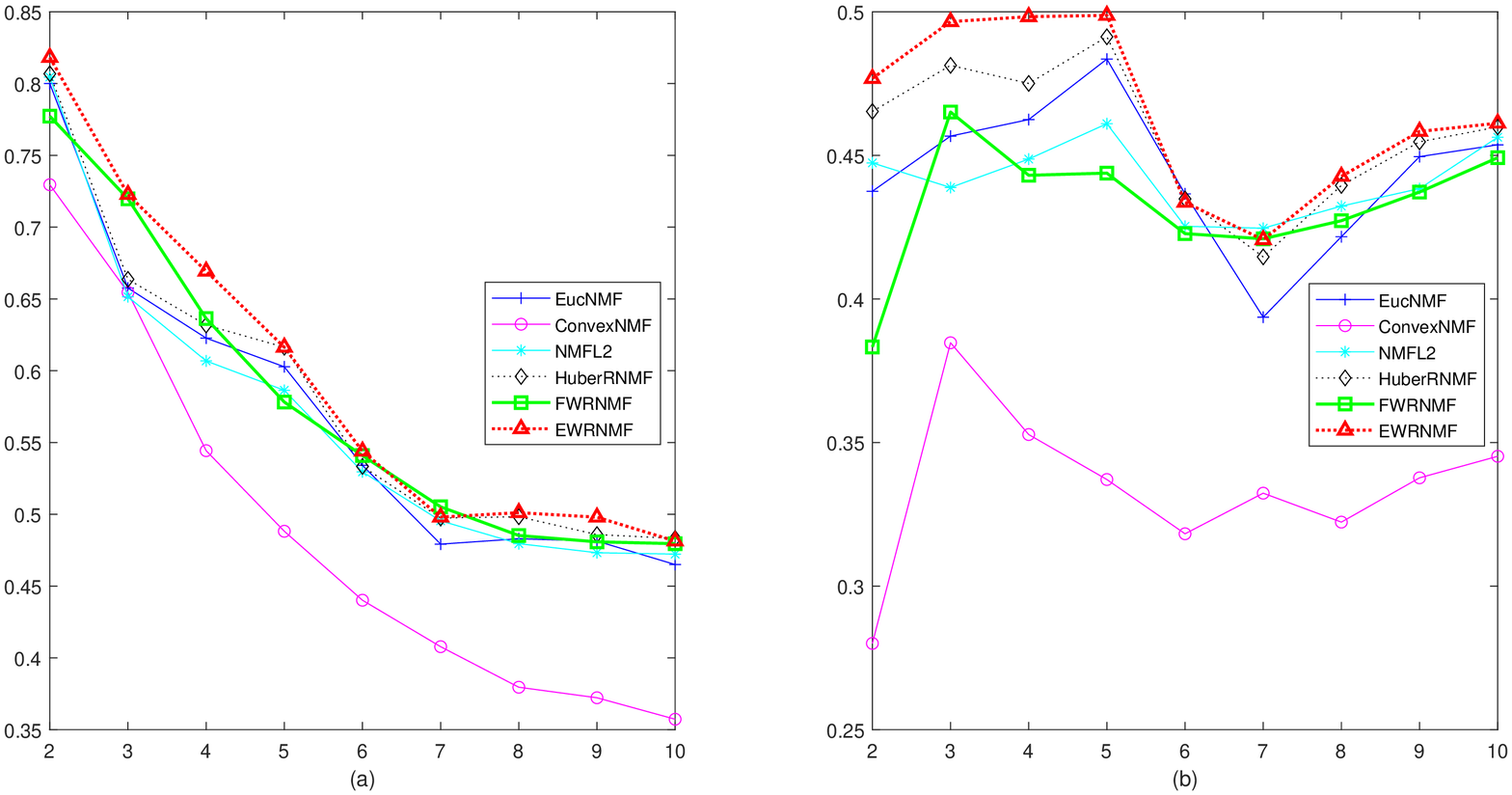}\label{4}
	\caption{Clustering performance versus cluster number on the Yale dataset: (a) ACC and (b) NMI.}
	\label{fig6}
\end{figure*}

\begin{figure*}[htbp]
	\centering
	\includegraphics[height=165pt,width=400pt]{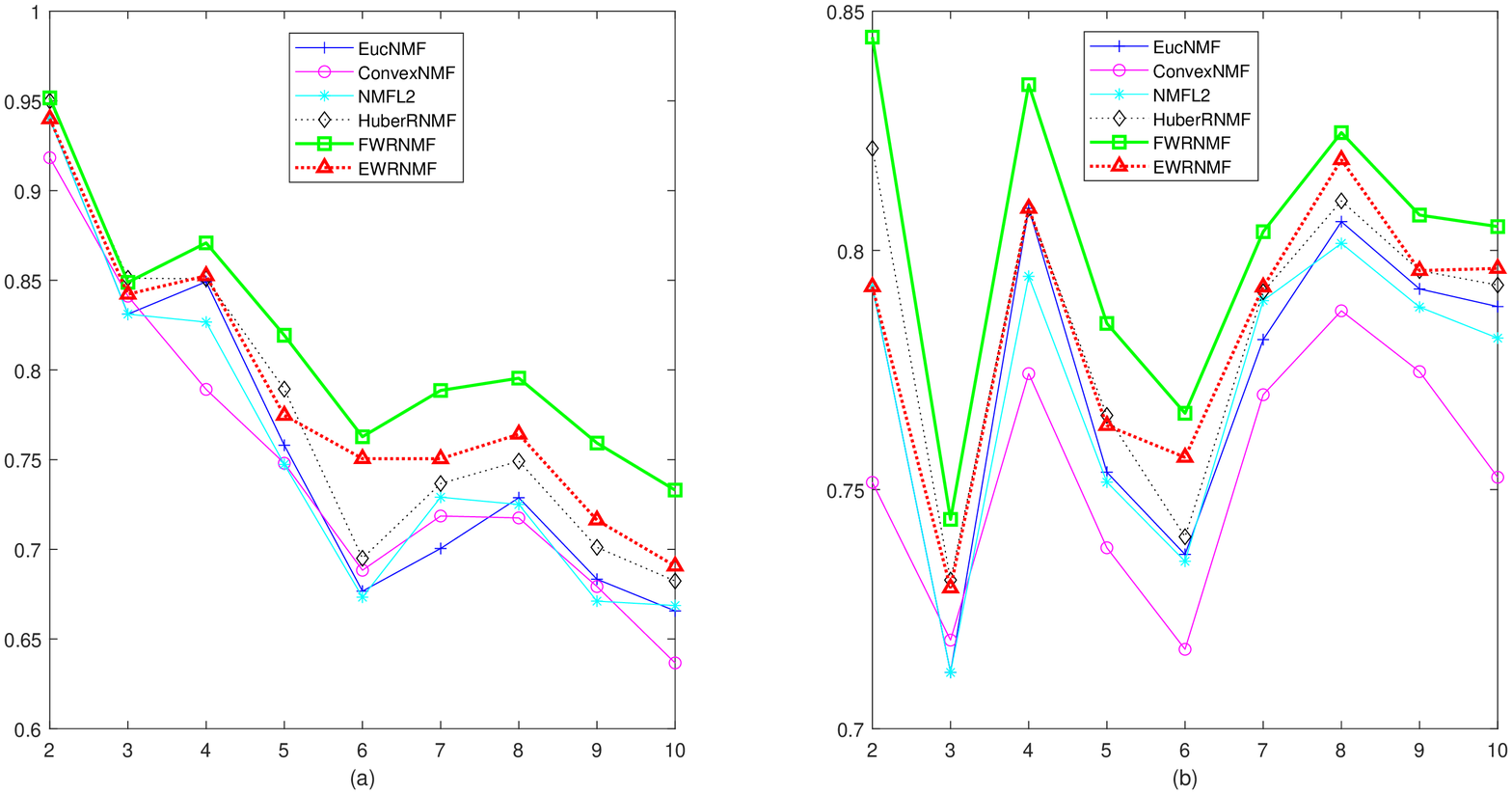}\label{4}
	\caption{Clustering performance versus cluster number on the GTFD dataset: (a) ACC and (b) NMI.}
	\label{fig7}
\end{figure*}

%\begin{figure*}[htbp]
%	\centering
%	\includegraphics[scale=0.45]{page_ClustNum.eps}\label{4}
%	\caption{Clustering performance versus cluster number on the Page-block dataset: (a) ACC and (b) NMI.}
%	\label{fig8}
%\end{figure*}

\subsection{Visualization}
We will directly visualize the results of the NMF methods for qualitative comparison. The Dimdata dataset is selected, in which we reduce the dimension number of the original data to two dimensions since it includes two clusters. The separability of new representation ($H$) can be directly observed in Figure \ref{fig9}. By the naked eye, it is easy to observe that the new representations obtained by FWRNMF and EWRNMF have good separability if considering a center-distance-based clustering method such as k-means. This explains why they obtain an extremely excellent result in Tables \ref{tab2} and \ref{tab3} regardless of accuracy or NMI.

\begin{figure*}[!h]
	\centering
	\includegraphics[height=210pt,width=400pt]{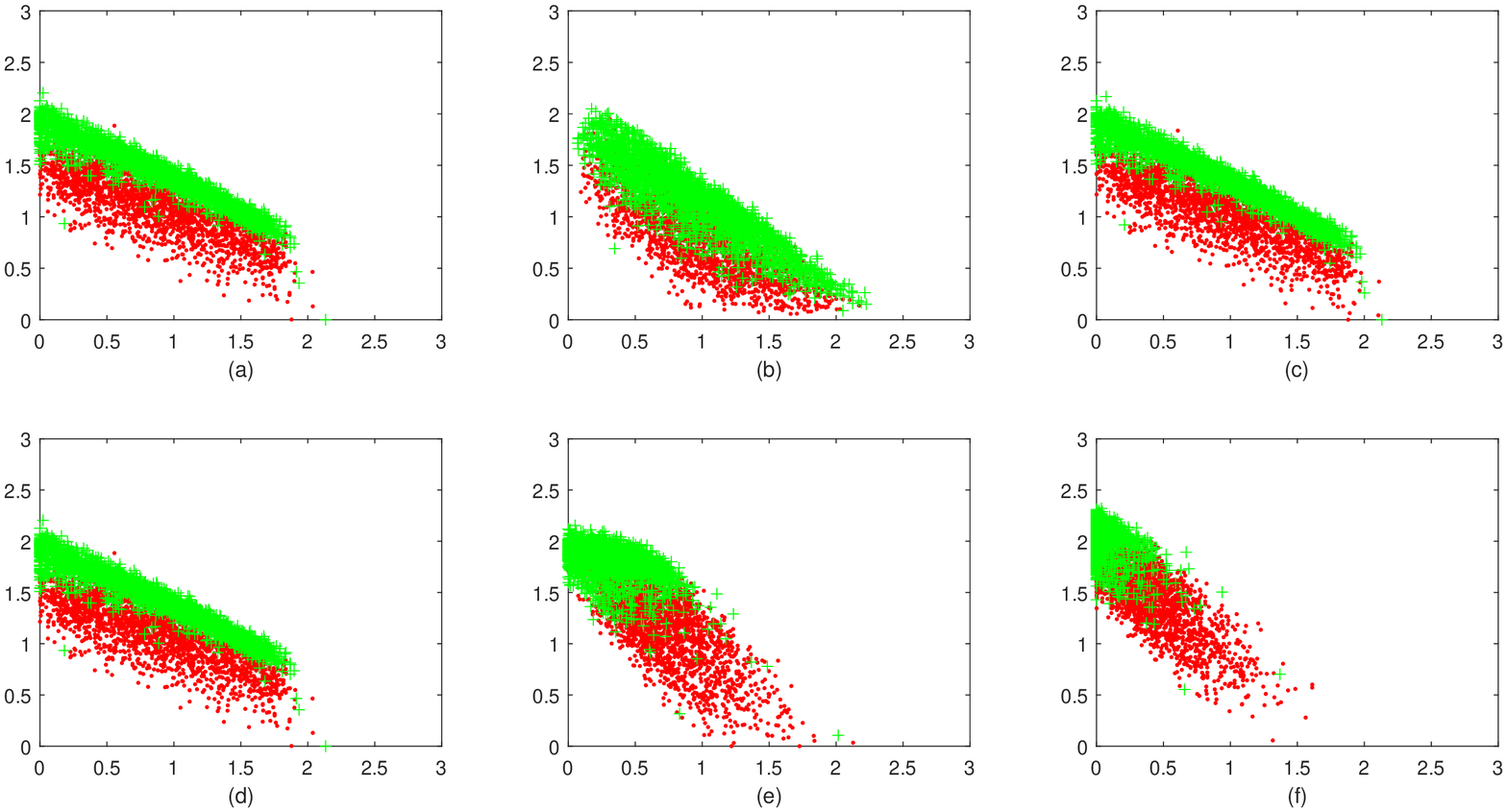}\label{4}
	\caption{Visualization of different methods on the Dimdata dataset which are obtained by (a) EucNMF, (b) ConvexNMF, (c) NMFL2, (d) HuberRNMF, (e) FWRNMF and (f) EWRNMF.}
	\label{fig9}
\end{figure*}
\begin{figure*}[!h]
	\centering
	\includegraphics[height=210pt,width=400pt]{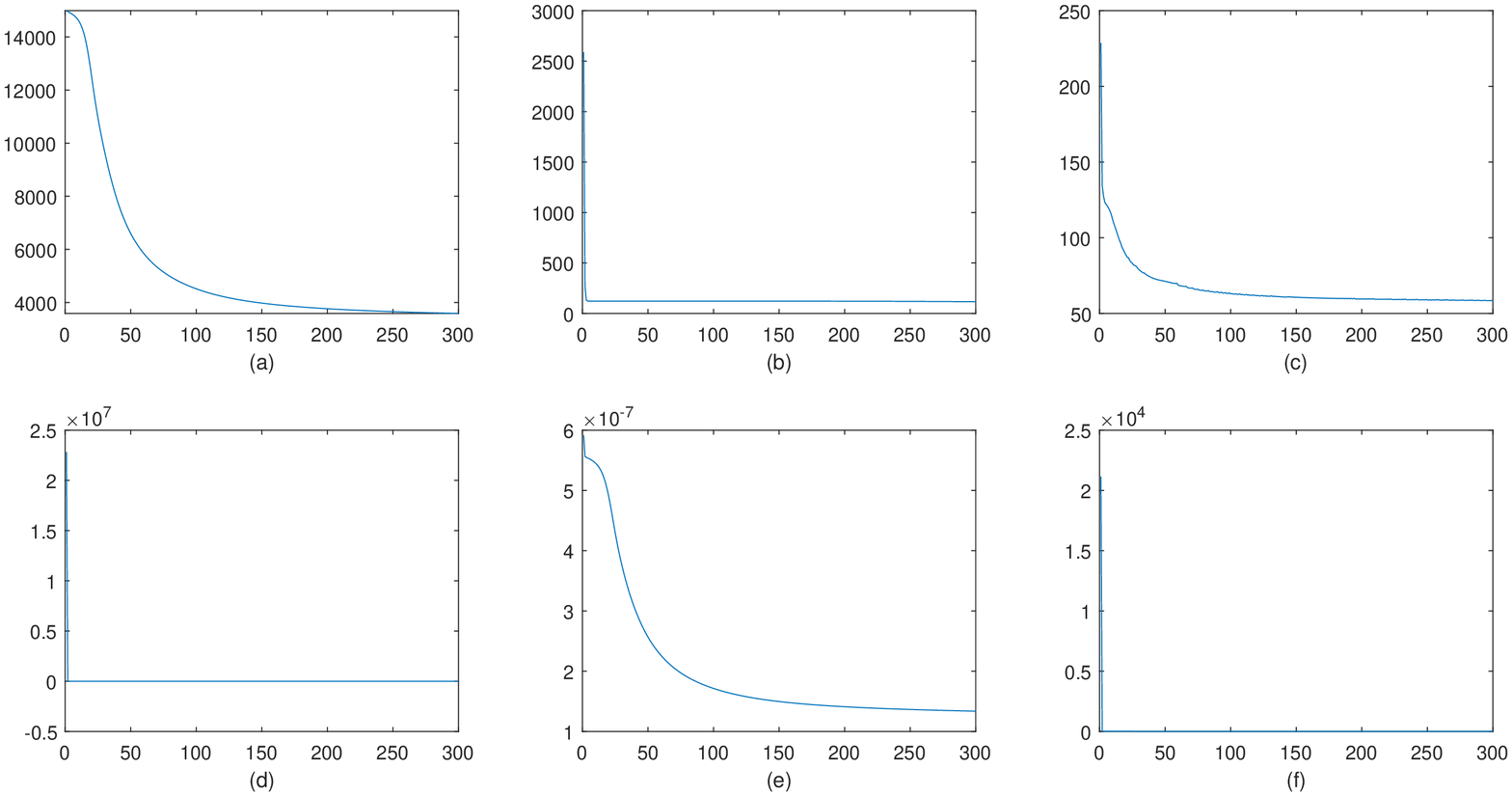}\label{4}
	\caption{Objective function on the ORL dataset which are obtained by (a) EucNMF, (b) ConvexNMF, (c) NMFL2, (d) HuberRNMF, (e) FWRNMF and (f) EWRNMF.}
	\label{fig10}
\end{figure*}

\subsection{Convergence speed and part-based learning}
From the theoretical analysis, we know that the objective function of the proposed methods are monotonically decreasing, but due to the nonconvexity of the objective function, it cannot be guaranteed to be strictly convergent, which is the commonplace in character for NMF methods \cite{chih-jen_lin_convergence_2007}. Thus, we investigate the convergence speed of the NMF methods. Figure \ref{fig10} shows the objective functions on the ORL dataset which are obtained by EucNMF, ConvexNMF, NMFL2, HuberRNMF, FWRNMF and EWRNMF. The convergence speed of the proposed FWRNMF is relatively slow, but EWRNMF shows an excellent performance. However, both methods basically show convergence when iterating 200 times. We also show the base images in Figure \ref{fig11}. As can be seen, ConvexNMF identifies more global faces; and the other methods present similar sparse base images that are difficult to distinguish from each other.
%Except for the ConvexNMF method, which converges slowly, the other methods reached a stable point within 200 iterations, demonstrating that they have similar convergence speeds. We also shows the base image in Figure \ref{fig11}. As can be seen, ConvexNMF identifies more global faces; and the other methods present similar sparse base images that are difficult to distinguish from each other.

\begin{figure*}[htbp]
	\centering
	\includegraphics[height=190pt,width=500pt]{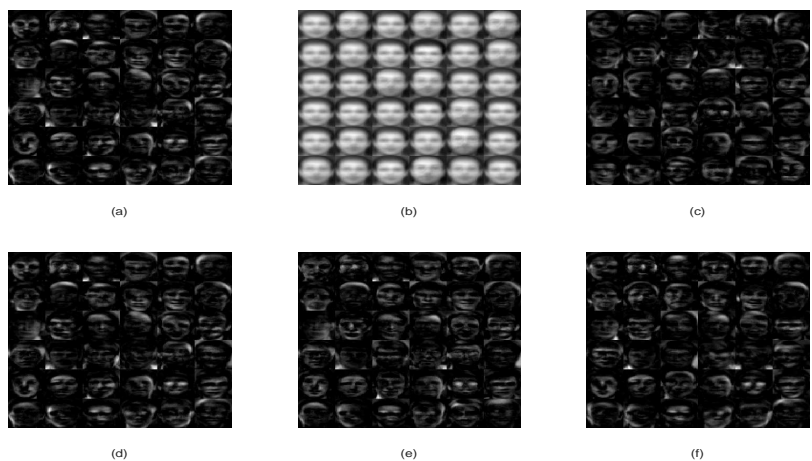}\label{4}
	\caption{Some base images with respect to the ORL dataset which are obtained by (a) EucNMF, (b) ConvexNMF, (c) NMFL2, (d) HuberRNMF, (e) FWRNMF and (f) EWRNMF.}
	\label{fig11}
\end{figure*}

\section{Conclusion}
This paper proposes two new methods to obtain robust feature representation for the NMF problem by introducing two types of adaptive weights for each sample. It is very different from the existing robust NMF methods that use a slow growth measure. These adaptive weights can automatically identify outliers and normal points, and give different importance based on outlier degree. The experimental results show that the proposed methods are flexible and effective.

However, both of new methods require an additional hyperparameter to smooth the weights. In the future, we will develop an automatic method to determine the hyperparameters. 

\section*{Acknowledgement}
This work was supported by  the Fundamental Research Funds for the Central Universities of China (N180719020).

%\section*{References}

%\bibliographystyle{unsrt} 
%\bibliographystyle{elsarticle-harv}
%\bibliographystyle{elsarticle-num}
%\bibliographystyle{elsarticle-num-names}
%\bibliographystyle{model1a-num-names}
\bibliographystyle{model1-num-names}
\bibliography{references}

\begin{thebibliography}{20}
\expandafter\ifx\csname natexlab\endcsname\relax\def\natexlab#1{#1}\fi
\providecommand{\url}[1]{\texttt{#1}}
\providecommand{\href}[2]{#2}
\providecommand{\path}[1]{#1}
\providecommand{\DOIprefix}{doi:}
\providecommand{\ArXivprefix}{arXiv:}
\providecommand{\URLprefix}{URL: }
\providecommand{\Pubmedprefix}{pmid:}
\providecommand{\doi}[1]{\href{http://dx.doi.org/#1}{\path{#1}}}
\providecommand{\Pubmed}[1]{\href{pmid:#1}{\path{#1}}}
\providecommand{\bibinfo}[2]{#2}
\ifx\xfnm\relax \def\xfnm[#1]{\unskip,\space#1}\fi
%Type = Book
\bibitem[{Gersho and Gray(2012)}]{gersho_vector_1992}
\bibinfo{author}{A.~Gersho}, \bibinfo{author}{R.~M. Gray},
  \bibinfo{title}{Vector Quantization and Signal Compression}, volume
  \bibinfo{volume}{159}, \bibinfo{publisher}{Springer Science and Business
  Media}, \bibinfo{year}{2012}.
%Type = Article
\bibitem[{Hu et~al.(2017)Hu, Lu, Ye, and Zeng}]{hu_singular_2017}
\bibinfo{author}{C.~Hu}, \bibinfo{author}{X.~Lu}, \bibinfo{author}{M.~Ye},
  \bibinfo{author}{W.~Zeng},
\newblock \bibinfo{title}{Singular value decomposition and local near neighbors
  for face recognition under varying illumination},
\newblock \bibinfo{journal}{Pattern Recognition} \bibinfo{volume}{64}
  (\bibinfo{year}{2017}) \bibinfo{pages}{60--83}.
%Type = Article
\bibitem[{Rose(2010)}]{abdi_principal_2010}
\bibinfo{author}{W.~Rose},
\newblock \bibinfo{title}{Principal component analysis},
\newblock \bibinfo{journal}{Wiley Interdisciplinary Reviews: Computational
  Statistics} \bibinfo{volume}{24} (\bibinfo{year}{2010})
  \bibinfo{pages}{433--459}.
%Type = Article
\bibitem[{Hyv{\"a}rinen and Oja(2000)}]{hyvarinen_independent_2000}
\bibinfo{author}{A.~Hyv{\"a}rinen}, \bibinfo{author}{E.~Oja},
\newblock \bibinfo{title}{Independent component analysis: algorithms and
  applications},
\newblock \bibinfo{journal}{Neural Networks} \bibinfo{volume}{13}
  (\bibinfo{year}{2000}) \bibinfo{pages}{411--430}.
%Type = Article
\bibitem[{Lee and Seung(1999)}]{Lee1999Learning}
\bibinfo{author}{D.~D. Lee}, \bibinfo{author}{H.~S. Seung},
\newblock \bibinfo{title}{Learning the parts of objects by non-negative matrix
  factorization},
\newblock \bibinfo{journal}{Nature} \bibinfo{volume}{401}
  (\bibinfo{year}{1999}) \bibinfo{pages}{788--791}.
%Type = Article
\bibitem[{Gong et~al.(2014)Gong, Tao, Fu, and Yang}]{gong_ficks_2015}
\bibinfo{author}{C.~Gong}, \bibinfo{author}{D.~Tao}, \bibinfo{author}{K.~Fu},
  \bibinfo{author}{J.~Yang},
\newblock \bibinfo{title}{Fick’s law assisted propagation for semisupervised
  learning},
\newblock \bibinfo{journal}{IEEE Transactions on Neural Networks and Learning
  Systems} \bibinfo{volume}{26} (\bibinfo{year}{2014})
  \bibinfo{pages}{2148--2162}.
%Type = Article
\bibitem[{Li et~al.(2018)Li, Tang, and He}]{li_robust_2018}
\bibinfo{author}{Z.~Li}, \bibinfo{author}{J.~Tang}, \bibinfo{author}{X.~He},
\newblock \bibinfo{title}{Robust structured nonnegative matrix factorization
  for image representation},
\newblock \bibinfo{journal}{IEEE Transactions on Neural Networks and Learning
  Systems} \bibinfo{volume}{29} (\bibinfo{year}{2018})
  \bibinfo{pages}{1947--1960}.
%Type = Article
\bibitem[{Tepper and Sapiro(2016)}]{tepper_compressed_2016}
\bibinfo{author}{M.~Tepper}, \bibinfo{author}{G.~Sapiro},
\newblock \bibinfo{title}{Compressed nonnegative matrix factorization is fast
  and accurate},
\newblock \bibinfo{journal}{IEEE Transactions on Signal Processing}
  \bibinfo{volume}{64} (\bibinfo{year}{2016}) \bibinfo{pages}{2269--2283}.
%Type = Article
\bibitem[{Ding et~al.(2008)Ding, Li, and Jordan}]{ding_convex_2010}
\bibinfo{author}{C.~H. Ding}, \bibinfo{author}{T.~Li}, \bibinfo{author}{M.~I.
  Jordan},
\newblock \bibinfo{title}{Convex and semi-nonnegative matrix factorizations},
\newblock \bibinfo{journal}{IEEE Transactions on Pattern Analysis and Machine
  Intelligence} \bibinfo{volume}{32} (\bibinfo{year}{2008})
  \bibinfo{pages}{45--55}.
%Type = Article
\bibitem[{Guillamet et~al.(2003)Guillamet, Vitria, and
  Schiele}]{guillamet2003introducing}
\bibinfo{author}{D.~Guillamet}, \bibinfo{author}{J.~Vitria},
  \bibinfo{author}{B.~Schiele},
\newblock \bibinfo{title}{Introducing a weighted non-negative matrix
  factorization for image classification},
\newblock \bibinfo{journal}{Pattern Recognition Letters} \bibinfo{volume}{24}
  (\bibinfo{year}{2003}) \bibinfo{pages}{2447--2454}.
%Type = Article
\bibitem[{Xing et~al.(2018)Xing, Dong, Jiang, and Tang}]{xing_nonnegative_2018}
\bibinfo{author}{L.~Xing}, \bibinfo{author}{H.~Dong},
  \bibinfo{author}{W.~Jiang}, \bibinfo{author}{K.~Tang},
\newblock \bibinfo{title}{Nonnegative matrix factorization by joint
  locality-constrained and $\ell_{2, 1}$-norm regularization},
\newblock \bibinfo{journal}{Multimedia Tools and Applications}
  \bibinfo{volume}{77} (\bibinfo{year}{2018}) \bibinfo{pages}{3029--3048}.
%Type = Inproceedings
\bibitem[{Du et~al.(2012)Du, Li, and Shen}]{du_robust_2012}
\bibinfo{author}{L.~Du}, \bibinfo{author}{X.~Li}, \bibinfo{author}{Y.~D. Shen},
\newblock \bibinfo{title}{Robust nonnegative matrix factorization via
  half-quadratic minimization},
\newblock in: \bibinfo{booktitle}{2012 IEEE 12th International Conference on
  Data Mining}, \bibinfo{organization}{IEEE}, \bibinfo{year}{2012}, pp.
  \bibinfo{pages}{201--210}.
%Type = Inproceedings
\bibitem[{MacQueen(1967)}]{macqueen_methods_nodate}
\bibinfo{author}{J.~MacQueen},
\newblock \bibinfo{title}{Some methods for classification and analysis of
  multivariate observations},
\newblock in: \bibinfo{booktitle}{Proceedings of the Fifth Berkeley Symposium
  on Mathematical Statistics and Probability}, volume~\bibinfo{volume}{1},
  \bibinfo{organization}{Oakland, CA, USA}, \bibinfo{year}{1967}, pp.
  \bibinfo{pages}{281--297}.
%Type = Article
\bibitem[{Belhumeur et~al.(1997)Belhumeur, Hespanha, and
  Kriegman}]{belhumeur_eigenfaces_nodate}
\bibinfo{author}{P.~N. Belhumeur}, \bibinfo{author}{J.~P. Hespanha},
  \bibinfo{author}{D.~J. Kriegman},
\newblock \bibinfo{title}{Eigenfaces vs. fisherfaces: recognition using class
  specific linear projection},
\newblock \bibinfo{journal}{IEEE Transactions on Pattern Analysis and Machine
  Intelligence} \bibinfo{volume}{19} (\bibinfo{year}{1997})
  \bibinfo{pages}{711--720}.
%Type = Article
\bibitem[{Kuang et~al.(2015)Kuang, Yun, and Park}]{kuang_symnmf_2015}
\bibinfo{author}{D.~Kuang}, \bibinfo{author}{S.~Yun},
  \bibinfo{author}{H.~Park},
\newblock \bibinfo{title}{Symnmf: nonnegative low-rank approximation of a
  similarity matrix for graph clustering},
\newblock \bibinfo{journal}{Journal of Global Optimization}
  \bibinfo{volume}{62} (\bibinfo{year}{2015}) \bibinfo{pages}{545--574}.
%Type = Article
\bibitem[{Liu et~al.(2011)Liu, Wu, Li, Cai, and Huang}]{liu_constrained_2012}
\bibinfo{author}{H.~Liu}, \bibinfo{author}{Z.~Wu}, \bibinfo{author}{X.~Li},
  \bibinfo{author}{D.~Cai}, \bibinfo{author}{T.~S. Huang},
\newblock \bibinfo{title}{Constrained nonnegative matrix factorization for
  image representation},
\newblock \bibinfo{journal}{IEEE Transactions on Pattern Analysis and Machine
  Intelligence} \bibinfo{volume}{34} (\bibinfo{year}{2011})
  \bibinfo{pages}{1299--1311}.
%Type = Book
\bibitem[{Plummer and Lov{\'a}sz(1986)}]{plummer1986matching}
\bibinfo{author}{M.~D. Plummer}, \bibinfo{author}{L.~Lov{\'a}sz},
  \bibinfo{title}{Matching Theory}, \bibinfo{publisher}{Elsevier},
  \bibinfo{year}{1986}.
%Type = Article
\bibitem[{Cai et~al.(2005)Cai, He, and Han}]{cai_document_nodate}
\bibinfo{author}{D.~Cai}, \bibinfo{author}{X.~He}, \bibinfo{author}{J.~Han},
\newblock \bibinfo{title}{Document clustering using locality preserving
  indexing},
\newblock \bibinfo{journal}{IEEE Transactions on Knowledge and Data
  Engineering} \bibinfo{volume}{17} (\bibinfo{year}{2005})
  \bibinfo{pages}{1624--1637}.
%Type = Article
\bibitem[{Shahnaz et~al.(2006)Shahnaz, Berry, Pauca, and
  Plemmons}]{shahnaz_document_2006}
\bibinfo{author}{F.~Shahnaz}, \bibinfo{author}{M.~W. Berry},
  \bibinfo{author}{V.~P. Pauca}, \bibinfo{author}{R.~J. Plemmons},
\newblock \bibinfo{title}{Document clustering using nonnegative matrix
  factorization},
\newblock \bibinfo{journal}{Information Processing and Management}
  \bibinfo{volume}{42} (\bibinfo{year}{2006}) \bibinfo{pages}{373--386}.
%Type = Article
\bibitem[{Lin(2007)}]{chih-jen_lin_convergence_2007}
\bibinfo{author}{C.~J. Lin},
\newblock \bibinfo{title}{On the convergence of multiplicative update
  algorithms for nonnegative matrix factorization},
\newblock \bibinfo{journal}{IEEE Transactions on Neural Networks}
  \bibinfo{volume}{18} (\bibinfo{year}{2007}) \bibinfo{pages}{1589--1596}.

\end{thebibliography}

%\end{thebibliography}

%\bibliography{mybibfile}
%\end{spacing}
\end{document}